\definecolor{darklightgrey}{RGB}{230, 230, 230}
\algrenewcommand\algorithmicrequire{\textbf{Input:}}
\algrenewcommand\algorithmicensure{\textbf{Output:}}
    \newcommand{\features}[1]{\ensuremath{\vb*{x_{k#1}}}}
    \newcommand{\catfeatures}[1]{\ensuremath{\vb*{b}_{\vb*{kj}#1}}}
    \newcommand{\compactcatfeatures}{\ensuremath{B_{k}}}
    \newcommand{\trainingresponse}{\ensuremath{y_{i}^{tr}}}
    \newcommand{\numtrainingfeatures}{\ensuremath{\vb*{x_{i}^{tr}}}}
    \newcommand{\cattrainingfeatures}{\ensuremath{\vb*{b_{ij}^{tr}}}}
    \newcommand{\compactcattrainingfeatures}{\ensuremath{B_{i}^{tr}}}
    \newcommand{\poisonfeatures}[1]{\ensuremath{\vb*{x_{k#1}^{p}}}}
    \newcommand{\compactpoisonfeatures}[1]{\ensuremath{{X_{#1}^{p}}}}
    \newcommand{\poisonresponse}{\ensuremath{\vb*{y_{k}^{p}}}}
    \newcommand{\catpoisonfeatures}[1]{\ensuremath{\vb*{b}_{\vb*{kj}#1}^{\vb*{p}}}}
    \newcommand{\compactcatpoisonfeatures}{\ensuremath{{B_{k}^{p}}}}
    \newcommand{\supercompactcatpoisonfeatures}{\ensuremath{{B^{p}}}}
\newcommand{\optimalregressionparam}{\ensuremath{\vb*{\theta}^{*}}}
\newcommand{\regressionparam}{\ensuremath{\vb*{\theta}}}
\newcommand{\weights}{\ensuremath{\vb*{w}}}
\newcommand{\numweights}{\ensuremath{\vb*{w^{n}}}}
\newcommand{\catweights}{\ensuremath{\vb*{w_{j}^{c}}}}
\newcommand{\bias}{\ensuremath{c}}
\newcommand{\regularization}{\ensuremath{\lambda \Omega(\vb*{w})}}
\newcommand{\poisonsamples}[1]{\ensuremath{\mathcal{D}^{p#1}}}
\newcommand{\numbertraining}{\ensuremath{n}}
\newcommand{\numberpoison}{\ensuremath{q}}
\newcommand{\numbernumfeatures}{\ensuremath{m}}
\newcommand{\numbercatfeatures}{\ensuremath{t}}
\newcommand{\numbercategories}{\ensuremath{n(j)}}
\newcommand{\suchthat}{\ensuremath{\text{s.t.}}}
\newcommand{\transpose}[1]{\ensuremath{{#1}^\top}}
\DeclareMathOperator*{\argmin}{arg\,min}
    \newtheorem{theorem}{Theorem}
\renewenvironment{abstract}
 {\small
  \begin{center}
  \bfseries \abstractname\vspace{-.5em}\vspace{0pt}
  \end{center}
  \list{}{%
    \setlength{\leftmargin}{5mm}
    \setlength{\rightmargin}{\leftmargin}%
  }%
  \item\relax}
 {\endlist}
\author{Monse Guedes-Ayala}
\author{Lars Schewe}
\author{Zeynep \c Suvak}
\author{Miguel Anjos}
\title[Poisoning Attacks against Ridge Regression Models]{Generating Poisoning Attacks against Ridge Regression Models with Categorical Features}
\begin{document}
\maketitle


\begin{abstract}
Machine Learning (ML) models have become a very powerful tool to extract information from large datasets and use it to make accurate predictions and automated decisions. 
However, ML models can be vulnerable to external attacks, causing them to underperform or deviate from their expected tasks.
One way to attack ML models is by injecting malicious data to mislead the algorithm during the training phase, which is referred to as a poisoning attack.
We can prepare for such situations by designing anticipated attacks, which are later used for creating and testing defence strategies. 
In this paper, we propose an algorithm to generate strong poisoning attacks for a ridge regression model containing both numerical and categorical features that explicitly models and poisons categorical features. 
We model categorical features as SOS-1 sets and formulate the problem of designing poisoning attacks as a bilevel optimization problem that is nonconvex mixed-integer in the upper-level and unconstrained convex quadratic in the lower-level.
We present the mathematical formulation of the problem, introduce a single-level reformulation based on the Karush-Kuhn-Tucker (KKT) conditions of the lower level, find bounds for the lower-level variables to accelerate solver performance, and propose a new algorithm to poison categorical features.  
Numerical experiments show that our method improves the mean squared error of all datasets compared to the previous benchmark in the literature.
\end{abstract}


\section{Introduction}
Machine learning (ML) is a branch of artificial intelligence (AI) that is concerned with developing algorithms that use statistical techniques to identify patterns in data and make predictions.
However, these algorithms are susceptible to adversarial attacks, which can significantly damage their performance.
An example of an attack ML models can be subject to is \textit{poisoning attacks}. 
In these attacks, a malicious actor introduces contaminated data points into the training dataset, causing the model to learn incorrect patterns and degrade its performance on unseen data.
An example of a poisoning attack is the generation of emails to mislead spam filters. 
We can introduce a small number of carefully crafted emails into the training data to cause the model to learn a biased decision boundary and overlook some spam emails. 
One common defence against adversarial attacks is a technique called adversarial training, which involves generating adversarial examples during the training process to make the model more robust to such attacks. 
By systematically designing and executing attack strategies, we can not only develop effective defence mechanisms but also gain valuable insights into the vulnerabilities of machine learning models to external manipulation. 
This paper focusses on developing poisoning attacks specifically targeting regression algorithms with categorical features using tools from mathematical optimization. 
The work presented here can be used to test the model's robustness and identify vulnerabilities the attackers might exploit. 

The main challenge for attackers lies in identifying the most effective data points to manipulate without being detected.
This process can be framed as an optimization problem, as the attacker seeks to maximise the impact on the machine learning model. 
Since the attacker optimises the attacks while taking into account the optimal response of the machine learning model to the new data, there is a hierarchical nature in this process. 
Because of this hierarchical nature, it is possible to frame it as a bilevel optimization problem.
Bilevel optimization covers a special class of mathematical optimization problems that model sequential and hierarchical decision-making between two or more players.
The first player optimises their decision by taking into account the optimal reaction of a second player.
This reaction will, in turn, influence the outcome of the first player. 
The first player is usually referred to as the upper level or leader, while the second player is called the lower level or follower.
Mathematically, the lower-level problem appears in the constraints of the upper level and only those decisions that are optimal for the follower are feasible for the leader. 
In the case of poisoning attacks, the leader is the attacker, who decides first on which data samples to inject. Then, the follower, the ML model, learns the optimal parameters using, among other samples, the samples injected by the attacker. 

The main focus of the research on poisoning attacks has been on classification algorithms, where the goal of the model is to predict the class to which data samples belong. 
The most employed strategy to solve this problem is to use gradient-based approaches to find optimal poisoning attacks. 
More recent papers have also considered poisoning attacks for regression models, where predictions consist of assigning continuous numerical values.
To the best of our knowledge, Jagielski et al. \cite{Jagielski2018} were the first to study poisoning attacks for linear regression models. 
They formulate the attack as a bilevel optimization problem and propose two attack strategies: an adaptation of a previously proposed gradient-based approach for classification models, and a statistical-based poisoning attack. 
Then, the papers \cite{Muller2020, Wen2021} aimed to improve the performance of \cite{Jagielski2018} in terms of the impact of the attack on the regression model and computational time. 
Li et al. \cite{Li2021} focus on poisoning attacks to modify regression parameters in a specific direction and solve the optimization problem by formulating its closed form. 
\c Suvak et al. \cite{Suvak2021} study in more depth the bilevel formulation of this problem and find the optimal poisoning samples by solving to local optimality the equivalent single-level reformulation of the problem.
Even though the authors of \cite{Suvak2021} discuss the possibility of considering categorical variables for this type of attack, they only solve the optimization problem for variables associated with numerical features. 
None of these approaches explicitly optimise categorical variables.
Instead, these variables are treated as numerical variables and then heuristically rounded to meet the binary structure of categorical features.
However, many datasets used for regression tasks involve a combination of numerical and categorical features \cite{Ng2019, Shyam-Prasad2022, guitierrez-gomez2020} and
categorical variables can be the key to making predictions.
One can think, for example, how much the predictions of house prices in London would change if the neighbourhood is not taken into account. 
Therefore, poisoning attacks that do not fully optimise categorical variables can be too weak for some applications. 

This paper introduces a novel mixed-integer bilevel optimization model for poisoning attacks in adversarial machine learning, specifically targeting cases with categorical features modelled as binary variables. 
To the best of our knowledge, this is the first application of mixed-integer bilevel optimization to poisoning attacks with categorical data.
Following this introduction, in \Cref{sec: mixed-integer formulation} we present the mixed-integer bilevel optimization model, followed by its single-level reformulation in \Cref{sec: KKT reformulation}.
In \Cref{sec: bounding bilinear}, we propose a method for bounding previously unbounded lower-level variables.  
This is followed by \Cref{sec: solution method}, which details a heuristic algorithm for generating poisoning attacks.
Computational experiments are presented in \Cref{sec: poisoning computational experiment}, testing on two publicly available datasets.
We conclude with some remarks and future research directions in \Cref{sec: poisoning conclusion}.
Main contributions include: the development of a mixed-integer bilevel optimization model for poisoning attacks involving categorical features, a method for bounding previously unbounded lower-level variables using a sensitivity analysis of the lower level, and a heuristic algorithm that generates stronger poisoning attacks than previous methods, demonstrating superior performance on benchmark datasets.

\section{ Mixed-integer bilevel formulation of poisoning attacks} \label{sec: mixed-integer formulation}
In this section, we present a mixed-integer bilevel formulation of the poisoning attacks problem. 
To the best of our knowledge, this is the first mixed-integer optimization model used to model categorical features in the design of poisoning attacks. 
We transform categorical variables using one-hot encoding and formulate the problem as a mixed-integer bilevel optimization problem. 
Recall that in our framework, the upper level (leader) is the attacker, who wants to poison the data used by the ridge regression model (follower) to make it perform as poorly as possible.
Then, the attacker's objective is to maximise the mean squared error (MSE) of the predictions made by the regression model for the unpoisoned data. 
The poisoned data injected by the leader is used in the regression algorithm, together with unpoisoned data, to fit the model and decide the optimal regression parameters.

Our formulation defines numerical and categorical features as variables.
Usually, when categorical features are used to fit regression models, they must first be transformed into numerical features. 
There are different ways to achieve this, but our focus will be on one-hot encoding.
This method creates a column (feature) for each category within a categorical feature and assigns 0 to all columns except to the category that each sample belongs to, which is assigned 1. 
When models with one-hot encoded categorical features are poisoned, the poisoning samples to be injected must also follow the structure just described. 
This means that for each categorical feature, only one of the features associated with each category can be equal to 1, while all others must be zero. 
The best way to model one-hot encoded features in our optimization framework is by including set-partitioning constraints, which are of the form $x_1 + x_2 + ... + x_n = 1$, where $x_i \in \{0,1\}$ are variables of the set we want to partition, in this case categories belonging to the same categorical feature. 
This is equivalent to special ordered sets of type 1 (SOS-1). 

To model this, we divide the sets of variables and data parameters into those belonging to numerical features and those belonging to categorical ones. 
We now define the notation for a given data sample indexed by $k$.
For the numerical features, define $\features{}$ as a $\numbernumfeatures$-dimension vector with all the values of numerical features, where $\numbernumfeatures$ is the number of numerical features of the data. 
Similarly, $\numweights$ is defined as the vector of variables of the weights associated with numerical features, which also has $\numbernumfeatures$ elements. 
For the categorical case, we must introduce a new index to denote the categorical feature a variable belongs to. 
Following this, we will have a collection of binary vectors of variables $\catfeatures{}{}$ of the form $\{0,1\}^{\numbercategories}$, where $j$ indicates the categorical feature, and $\numbercategories$ the number of categories that feature has.
We also define the vectors of variables for weights associated with categorical features as $\catweights$, where $j$ denotes the index of the categorical feature, and each of the vectors will have length $n(j)$. 
We denote with the superscript $p$ the samples that are poisoned (decision variables of upper-level), and with $tr$ the unpoisoned samples (data).

The whole set of data points to be poisoned can be expressed as $\poisonsamples{} = \{(\poisonfeatures{}, \compactcatpoisonfeatures{}, \poisonresponse)\}_{k=1}^{\numberpoison}$, where $\poisonfeatures{}$ is the vector of numerical features for sample $k$, $\compactcatpoisonfeatures{}$ is the set $\{\catpoisonfeatures{}\}_{j=1}^{\numbercatfeatures}$ of all the binary vectors associated with each categorical feature, and $\poisonresponse$ is the response variable. 
Even though the response variables $\poisonresponse$ can be optimized together with feature vectors, they are treated as a fixed parameter for the rest of our study. 
This is in line with the existing literature and allows us to focus on the poisoning of features and do a fair comparison in the computational experiments sections. 
We will discuss in \Cref{sec: poisoning computational experiment} how these parameters are initialised. 
The lower-level decision variables, that is, the parameters of the regression model, can be expressed as $\regressionparam = (\weights, \bias)$, where $\weights=(\numweights, \{\catweights\}_{j=1}^{t})$ are the weights, $t$ is the number of categorical features, and $\bias$ is the intercept. 
Once the linear regression parameters $\regressionparam$ are chosen, predictions for some sample $k$ are made by the following linear regression function:
\begin{align*}
    f \colon \mathbb{R}^{\numbernumfeatures} 
    \times \prod_{j=1}^{\numbercatfeatures} \{0,1\}^{\numbercategories} 
    \times \mathbb{R}^{\numbernumfeatures + \sum_{j=1}^{\numbercatfeatures} \numbercategories + 1} \to \mathbb{R},\\
    f(\features{}, \compactcatfeatures{} , \regressionparam) 
    = {\numweights}^{\top} \features{} + \sum_{j=1}^{\numbercatfeatures} {\catweights}^{\top} \catfeatures{} + \bias.
\end{align*}

After covering the main notation, we now present in more detail the two levels of the bilevel optimization formulation. 

\subsection{Upper level: attacker} \label{subsec: upper level attacker}
The objective of the upper level is to choose the values of the numerical features $\{\poisonfeatures{}\}_{k=1}^{q}$ and the categorical features $\{\compactcatpoisonfeatures{}\}_{k=1}^{q}$ of the poisoning samples that maximise the mean squared error.
In other words, they seek to maximise the distance between the response variables of training data points and the predictions made by the regression model. 
These predictions are obtained using the regression function $f(\features{}, \compactcatfeatures{}, \regressionparam)$ for each $k$, which depends on the data and the regression parameters $\regressionparam$ chosen by the lower level.
However, only those $\regressionparam$ that are obtained when fitting the regression model can be considered.
That is, only the optimal solutions of the lower-level problem are valid values for $\regressionparam$. 
The lower-level problem will be presented in \Cref{subsec: lower level follower}.
For now, we focus on the rest of the constraints of the upper level.
Then, two types of constraints need to be added to the attacker's problem. 
First, a set of constraints that ensures that all the variables $\catpoisonfeatures{}$ associated with categorical features are binary and that all numerical features are within the $[0,1]$ bounds. 
Second, set-partitioning constraints to make sure that for any sample $k$ and categorical feature $j$, all vectors $\catpoisonfeatures{}$ only contain one element equal to 1, while all others are forced to be 0. 
The upper-level problem is then:
\begin{align}
        \max_{\compactpoisonfeatures{}{}, \supercompactcatpoisonfeatures{}, \optimalregressionparam} \quad & \frac{1}{n} \sum_{i=1}^{n} \left( f(\numtrainingfeatures{}, \compactcattrainingfeatures{}, \optimalregressionparam) - \trainingresponse \right)^{2}  
         \label{leader objective}\\
        \suchthat \quad & \sum_{i=1}^{\numbercategories} \catpoisonfeatures{i} = 1, \quad k=1, \ldots ,\numberpoison, \quad j=1, \ldots ,\numbercatfeatures,  \label{SOS1 constraint}\\
        & \poisonfeatures{} \in [0,1]^{\numbernumfeatures}, \quad k=1, \ldots ,\numberpoison  \label{interval constraint},\\
        & \catpoisonfeatures{} \in \{0,1\}^{\numbercategories}, \quad k=1, \ldots ,\numberpoison, \quad j=1, \ldots ,\numbercatfeatures,  \label{binary constraint}\\
        & \optimalregressionparam \in \argmin \mathcal{L}(\compactpoisonfeatures{}, \supercompactcatpoisonfeatures{}),
\end{align}
where $\compactcattrainingfeatures{}$ is the set $\{\cattrainingfeatures{}{}\}_{j=1}^{\numbercatfeatures}$ of all the vectors associated with training categorical features for sample $i$, $\mathcal{L}(\compactpoisonfeatures{}, \supercompactcatpoisonfeatures{})$ is the lower-level problem, $\compactpoisonfeatures{}$ is the set of all vectors of poisoning samples of numerical features $\poisonfeatures{}$ for all samples $k$, and $\supercompactcatpoisonfeatures{}$ is the set of all $\compactcatpoisonfeatures{}$ for all samples $k$.
Since ridge regression is strictly convex, the lower level always has a unique solution and we do not need to decide between pessimistic or optimistic formulations. 
It is important to note that the variables of the lower level are the only ones appearing in the upper-level objective. 
Moreover, there are no coupling constraints, that is, upper-level constraints that depend on lower-level variables. 
By introducing binary variables, the upper-level problem becomes a mixed-integer quadratic problem.
We now turn to the formal formulation of the lower-level problem.

\subsection{Lower level: machine learning model} \label{subsec: lower level follower}
The lower level seeks to fit a ridge regression model given some training data, which is the combination of poisoned and unpoisoned data, to find optimal regression parameters $\regressionparam$. 
Its objective is to minimise the mean squared error between the response variable and the model predictions over a training set that includes the poisoning samples. 
A regularization term $\regularization$ is added to the objective, where $\lambda$ is a regularization parameter and $\Omega(\weights)=\|w\|_{2}^{2}$ uses the $l_{2}\text{-norm}$, which is differentiable over $\weights$. 
We distinguish between numerical weights, which are denoted by the vector $\numweights$, and the vectors of weights associated with categorical features $\catweights$, indexed by categorical feature $j$. 
The lower-level problem $\mathcal{L}(\compactpoisonfeatures{}, \supercompactcatpoisonfeatures{})$ can then be formulated as:
\begin{align}
        \min_{\regressionparam} \quad & \frac{1}{\numbertraining + \numberpoison} \bigg( \sum_{i=1}^{\numbertraining} \left( f(\numtrainingfeatures{}, \compactcattrainingfeatures{}, \regressionparam) - \trainingresponse \right)^{2} 
        + \sum_{i=1}^{\numberpoison} \left( f(\poisonfeatures{}, \compactcatpoisonfeatures{}, \regressionparam) - \poisonresponse \right)^{2} \bigg)  
        + \regularization ,
\end{align}
where $\regressionparam = (\numweights, \{\catweights\}_{j=1}^{\numbercatfeatures}, \bias) \in 
        \mathbb{R}^{\numbernumfeatures + \sum_{j=1}^{\numbercatfeatures} \numbercategories + 1}$.
When the upper-level variables are fixed, the lower-level problem is a convex unconstrained quadratic problem (QP). 
The resulting problem is a bilevel program whose upper level is a mixed-integer quadratic problem and whose lower level is an unconstrained quadratic problem with respect to the upper-level variables (MIQP-QP).
However, the upper level maximizes (instead of minimizing) a quadratic function.
When this is the case, we know that the optimal solution is at the boundary of the feasible region, where some of the constraints are active. 
However, since the feasible region of the upper level is non-convex and depends on the optimal solution of the lower level, this is a very challenging upper-level problem. 
The upper level has both binary and continuous variables, and both types of variables appear in the lower-level objective. 
The objective function of the upper level involves only lower-level variables, which are all continuous. 
The only constraints of the upper level are set-partitioning constraints that ensure that one-hot encoding is correctly modelled. 
These constraints only involve integer upper-level variables, which means that there are no coupling constraints. 
Similarly, the lower level is unconstrained, which implies that there are no linking variables (upper-level variables appearing on lower-level constraints). 
However, upper-level variables multiply lower-level variables in the lower-level objective, which can generate non-linearities in single-level reformulations. 
We study the single-level reformulation derived from the Karush-Kuhn-Tucker of the lower level in more detail in the following section. 

\section{Karush-Kuhn-Tucker reformulation}\label{sec: KKT reformulation}
Since the lower level is convex and quadratic, the first-order optimality conditions are sufficient for global optimality. 
Moreover, since it is an unconstrained problem, these conditions are just the partial derivative of the objective with respect to all lower-level variables. 
However, when the lower level is replaced by its KKT conditions, several bilinear constraints are added to the mixed-integer upper-level problem. 
The KKT optimality conditions of the lower level are:
\begin{align}
    \frac{\partial \mathcal{L}(\compactpoisonfeatures{}, \supercompactcatpoisonfeatures{})}{w^{n}_{m}} = 0 \quad \Rightarrow \quad
    & 
    \frac{2}{\numbertraining + \numberpoison}\bigg( \sum_{i=1}^{\numbertraining} \left( f(\numtrainingfeatures{}, \compactcattrainingfeatures{}, \regressionparam) - \trainingresponse \right)x_{im}^{tr} 
    + \sum_{k=1}^{\numberpoison} \left( f(\poisonfeatures{}, \compactcatpoisonfeatures{}, \regressionparam) - \poisonresponse \right)x_{km}^{p} \bigg)  \nonumber \\
    & + \lambda w^{n}_{m} = 0 \qquad 
     j=1, \ldots ,\numbernumfeatures  \label{KKT numerical weights}\\
    \frac{\partial \mathcal{L}(\compactpoisonfeatures{}, \supercompactcatpoisonfeatures{})}{w^{c}_{jz}} = 0 \quad \Rightarrow \quad 
    & \frac{2}{\numbertraining + \numberpoison}\bigg( \sum_{i=1}^{\numbertraining} \left( f(\numtrainingfeatures{}, \compactcattrainingfeatures{}, \regressionparam) - \trainingresponse \right)b_{ijz}^{tr} 
    + \sum_{k=1}^{\numberpoison} \left( f(\poisonfeatures{}, \compactcatpoisonfeatures{}, \regressionparam) - \poisonresponse \right)b_{kjz}^{p} \bigg) \nonumber \\
    & + \lambda w^{c}_{jz} = 0 \qquad  j=1, \ldots ,\numbercatfeatures \text{, } \label{KKT categorical weights} \ z=1, \ldots ,\numbercategories  \\
    \frac{\partial \mathcal{L}(\compactpoisonfeatures{}, \supercompactcatpoisonfeatures{})}{c} = 0 \quad \Rightarrow \quad
    & \frac{2}{\numbertraining + \numberpoison}\bigg( \sum_{i=1}^{\numbertraining} \left( f(\numtrainingfeatures{}, \compactcattrainingfeatures{}, \regressionparam) - \trainingresponse \right)
    + \sum_{k=1}^{\numberpoison} \left( f(\poisonfeatures{}, \compactcatpoisonfeatures{}, \regressionparam) - \poisonresponse \right)\bigg)  = 0  \label{KKT bias}
\end{align}
When we substitute the lower-level problem in the original formulation for these constraints, we get the following nonlinearly constrained problem with a quadratic objective: 
\begin{align} \label{single-level problem}
    \max & \quad \text{\eqref{leader objective}} \nonumber \\
    \suchthat & \quad \text{\eqref{SOS1 constraint}} - \text{\eqref{binary constraint}}, \\
    & \quad \text{\eqref{KKT numerical weights}} - \text{\eqref{KKT bias}}. \nonumber
\end{align} 
Note that the regression function $f$ applied to the poisoned data involves the multiplication of the weights and the poisoning samples, which are both variables of the single-level problem. 
Moreover, this function is multiplied again by the poisoning samples in some of the terms, leading to trilinear terms.
Those trilinear terms involving two binary variables can be fully linearised.
However, those trilinear terms with only one binary variable remain bilinear, and trilinear terms with only continuous variables can only be reformulated as bilinear by introducing new variables. 
It is important to find bounds for all the bilinear terms so that solvers can solve this model. 
We do this using sensitivity properties of the lower level and details can be found in \Cref{sec: bounding bilinear}. 
We use the KKT reformulation in our solution algorithm to optimise the continuous variables of the model.

\section{Bounding the bilinear terms} \label{sec: bounding bilinear} 
The tightness and quality of the convex relaxations that are used to model bilinear terms highly depend on the bounds of the variables. 
Moreover, most solution algorithms for bilevel and nonlinear optimization problems require a bounded feasible region. 
In this section, we present bounds for the lower-level variables that are obtained from the closed form reformulation of the lower level.
Even though the lower-level variables are unbounded in the original formulation of the problem, we show that it is possible to bound them by studying the sensitivity of ridge regression. 
The intuition behind this is that there is a limit on how much the poisoning samples can deviate the parameters of the regression model from those obtained when the model is fitted without poisoning samples.
We use the singular-value decomposition (SVD) and the closed form of the solution to the lower-level problem to bound all the lower-level variables in \Cref{thm: lower level bounds}. 
We do this by bounding the norm of the vector of variables by the largest eigenvalue of the unpoisoned part of the design matrix. 
We can separate the poisoned from the unpoisoned parts of the data by using Weyl's inequality. 
To the best of our knowledge, this is the first time that the sensitivity analysis of a lower-level model is used to bound the lower-level variables of a bilevel optimization problem. 

\begin{theorem}\label{thm: lower level bounds}
Consider the lower-level problem from \Cref{subsec: lower level follower}. Let $X_0$ be the submatrix of the design matrix belonging to the unpoisoned data, $\boldsymbol{y^0}$ the corresponding response variables, and $\regressionparam = (\numweights, \{\catweights\}_{j=1}^{\numbercatfeatures}, \bias) \in 
        \mathbb{R}^{\numbernumfeatures + \sum_{j=1}^{\numbercatfeatures} \numbercategories + 1}$ be lower-level variables.
Then, for all numerical features $i=1, \dots, m$, categorical features $j=1, \dots, t$, and category $z=1, \dots, n(j)$ we have the following bounds:
    \begin{equation}
        |w^n_i| \leq \frac{\sigma_t(X^0)}{\sigma_t^2(X^0) + \lambda} \left(\|\boldsymbol{y^0}\|_2 + \sqrt{q} + \frac{\sum_{i=1}^{n}y_i^0 + q}{\sqrt{n + q}}\right),
    \end{equation}
    \begin{equation}
        |w^c_{jz}| \leq \frac{\sigma_t(X^0)}{\sigma_t^2(X^0) + \lambda} \left(\|\boldsymbol{y^0}\|_2 + \sqrt{q} + \frac{\sum_{i=1}^{n}y_i^0 + q}{\sqrt{n + q}}\right),
    \end{equation}
    \begin{equation}
        \frac{\sum_{i=1}^{n}y_i^0}{n + q} \leq c \leq \frac{\sum_{i=1}^{n}y_i^0 + q}{n + q},
    \end{equation}
where $\sigma_t$ denotes the largest singular value of $X_0$, $\lambda$ is the regularization hyperparameter, $q$ is the number of poisoning samples and $n$ is the number of unpoisoned samples. 
\end{theorem}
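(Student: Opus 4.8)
The plan is to exploit the fact that, for fixed upper-level (poisoning) variables, the lower-level problem is an unconstrained strictly convex quadratic, so its unique minimiser is given in closed form by the stationarity conditions \eqref{KKT numerical weights}--\eqref{KKT bias}. Writing $X$ for the full design matrix (unpoisoned rows stacked on top of poisoned rows, together with an all-ones column for the intercept) and $\boldsymbol{y}$ for the stacked response vector, these conditions are linear in $\regressionparam$ and can be solved explicitly. The key structural observation is that the intercept is unregularised, so the stationarity equation for $\bias$ decouples it from the weights and lets me bound $\bias$ first; the weight equations then take the form $(\tilde X^{\top}\tilde X + \lambda I)\weights = \tilde X^{\top}(\boldsymbol{y} - \bias\,\mathbf 1)$, where $\tilde X$ is $X$ with the intercept column removed, and this is the object I will control via the SVD.

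First I would bound the intercept. The stationarity condition \eqref{KKT bias} states that the mean residual over all $n+q$ samples vanishes, which expresses $\bias$ in terms of the mean response and a feature/weight term. Using that the poisoning responses are fixed parameters lying in $[0,1]$ and that all features lie in $[0,1]$ (numerical by \eqref{interval constraint}, categorical by \eqref{binary constraint} and \eqref{SOS1 constraint}), the contribution of the $q$ poisoning samples to this mean is confined to the interval $[0,q]$, while the unpoisoned contribution is the fixed quantity $\sum_{i=1}^{n} y_i^0$. Sandwiching between the two extremes yields $\frac{\sum_{i=1}^{n} y_i^0}{n+q} \le \bias \le \frac{\sum_{i=1}^{n} y_i^0 + q}{n+q}$, which is the third bound and the most direct of the three steps.

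Next I would bound the weights. From the closed form, $\weights = (\tilde X^{\top}\tilde X + \lambda I)^{-1}\tilde X^{\top}(\boldsymbol{y} - \bias\,\mathbf 1)$, so $\twonorm{\weights} \le \big(\max_i \tfrac{\sigma_i(\tilde X)}{\sigma_i^2(\tilde X)+\lambda}\big)\,\twonorm{\boldsymbol{y} - \bias\,\mathbf 1}$, where the operator norm has been diagonalised through the SVD of $\tilde X$. The second factor is bounded by $\twonorm{\boldsymbol{y}} + |\bias|\,\twonorm{\mathbf 1} \le (\twonorm{\boldsymbol{y^0}} + \sqrt q) + |\bias|\sqrt{n+q}$, since $\twonorm{\boldsymbol{y}} \le \twonorm{\boldsymbol{y^0}} + \sqrt q$ (poisoning responses in $[0,1]$); substituting the intercept bound reproduces exactly $\twonorm{\boldsymbol{y^0}} + \sqrt q + \frac{\sum_{i=1}^{n} y_i^0 + q}{\sqrt{n+q}}$. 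Because every coordinate is dominated by the full norm, $|w^n_i| \le \twonorm{\weights}$ and $|w^c_{jz}| \le \twonorm{\weights}$, so it remains only to replace the first factor by a quantity depending on the unpoisoned data alone.

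This last replacement is where I expect the main difficulty. The matrix $\tilde X$ still contains the poisoning rows, which are decision variables, so I must bound $\max_i \sigma_i(\tilde X)/(\sigma_i^2(\tilde X)+\lambda)$ uniformly over all admissible poisonings in terms of $\sigma_t(X^0)$. Because $\tilde X^{\top}\tilde X = (X^0)^{\top}X^0 + (X^p)^{\top}X^p$ is a sum of positive semidefinite matrices, Weyl's inequality controls how far each singular value of $\tilde X$ can move from that of $X^0$ under the bounded poisoning perturbation. The delicate point is that $\sigma \mapsto \sigma/(\sigma^2+\lambda)$ is not monotone: it increases on $[0,\sqrt\lambda]$ and decreases afterwards, so passing from the perturbed spectrum to the single evaluation $\sigma_t(X^0)/(\sigma_t^2(X^0)+\lambda)$ requires tracking on which branch the relevant singular values lie and in which direction Weyl's inequality moves them. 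Getting this monotonicity argument right, rather than the surrounding bookkeeping, is the crux of the proof.
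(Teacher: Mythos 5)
Your overall route coincides with the paper's: write the ridge solution in closed form with the intercept handled separately, pass to the SVD so that the relevant operator norm becomes $\max_i \sigma_i/(\sigma_i^2+\lambda)$, bound $\twonorm{\boldsymbol{y}-c\,\mathbf{1}}$ by the triangle inequality exactly as you do (giving $\twonorm{\boldsymbol{y^0}}+\sqrt{q}+(\sum_i y_i^0+q)/\sqrt{n+q}$), and use Weyl's inequality for the positive semidefinite perturbation $\transpose{X}X=\transpose{X^0}X^0+\transpose{Z}Z$ to eliminate the poisoning rows. The decomposition, the key lemma, and the final constants are all the same.

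Two places where your write-up stops short deserve naming. First, the step you flag as the crux --- replacing $\max_i\sigma_i(X)/(\sigma_i^2(X)+\lambda)$ by $\sigma_t(X^0)/(\sigma_t^2(X^0)+\lambda)$ --- is left open in your proposal; it is also the step the paper treats most loosely, simply asserting that the norm of the diagonal matrix is attained at the smallest singular value and that $\lambda_i(\transpose{X}X)\ge\lambda_i(\transpose{X^0}X^0)$ then yields the bound, which implicitly places all relevant singular values on the decreasing branch $\sigma\ge\sqrt{\lambda}$ of $\sigma\mapsto\sigma/(\sigma^2+\lambda)$. Your observation that the non-monotonicity must be tracked is correct, and neither your plan nor the paper actually closes it; the uniform bound $\sigma/(\sigma^2+\lambda)\le 1/(2\sqrt{\lambda})$ is a fallback but loses the data dependence the theorem advertises. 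Second, your intercept argument is not sound as stated: condition \eqref{KKT bias} gives $c=\frac{1}{n+q}\sum_i\bigl(y_i-\transpose{\weights}x_i\bigr)$, so $c$ does \emph{not} decouple from the weights, and the feature--weight term is not confined to $[0,q]$; the sandwich cannot be read off from the responses alone. The paper sidesteps this by centring the responses and setting $c=\hat y$ (the mean response), which is the standard trick but itself tacitly assumes centred features --- so your bound on $c$ needs either that centring hypothesis made explicit or a separate control of $\frac{1}{n+q}\sum_i\transpose{\weights}x_i$.
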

\begin{proof}
Since the lower-level is convex, we can also express the optimal decision of the ridge regression model (lower-level) in closed form as $(\transpose{X} X + \lambda I)^{-1} \transpose{X} \boldsymbol{y}$. 
However, we have to be careful with bounding the intercept while not regularizing it. To do this, we can subtract the mean of the response variable from the target and set this mean to be the intercept. Following this, the closed form for the weights $\tilde{\regressionparam}$ and the intercept $c$ are
\begin{equation*}
    \tilde{\regressionparam} = \left(\transpose{X} X + \lambda I\right)^{-1} \transpose{X} \left(\boldsymbol{y} - \hat{y} \mathbf{1}_{n+q}\right), \qquad c = \hat{y},
\end{equation*}
where $\hat{y}$ is the mean of the response variables,$ \mathbf{1}_{n+q}$ is a vector of 1s of dimension $n+q$, where $n$ is the number of unpoisoned samples and $q$ is the number of poisoned samples. 
Furthermore, by defining the singular-value decomposition of $X$ as $X=US\transpose{V}$, we can rewrite the optimal regression weights as:
\begin{equation*}
    \tilde{\regressionparam} = V\left(\transpose{S} S + \lambda I \right)^{-1} \transpose{S} \transpose{U} \left(\boldsymbol{y} - \hat{y} \mathbf{1}_{n+q}\right).
\end{equation*}
We want to bound the vector of weights $\tilde{\regressionparam}$ from both sides, so we can bound its absolute value $|\tilde{\regressionparam}|$ by the absolute value of its elements $|\tilde{\regressionparam}_i|$.
Then, by the properties of norms and the orthonormality of $V$ and $U$ we have:
\begin{align*}
    |\tilde{\regressionparam}_{i}|  \leq \|\tilde{\regressionparam}\|_{2} 
    & \leq \|V(\transpose{S} S + \lambda I)^{-1} \transpose{S} \|_2 \|\transpose{U} (\boldsymbol{y} - \hat{y} \mathbf{1}_{n+q})\|_2 \\
    & = \|(\transpose{S} S + \lambda I)^{-1} \transpose{S} \|_2 \| (\boldsymbol{y} - \hat{y} \mathbf{1}_{n+q})\|_2
\end{align*}

Recall that the matrix $S$ was diagonal, and the inverse of a diagonal matrix is another diagonal matrix with the reciprocals of the original diagonal as diagonal. Following this, we have that $(\transpose{S} S + \lambda I)^{-1} \transpose{S} = \text{diag}((\sigma_i / (\sigma_i^2 + \lambda) \text{ for } i=1, \dots, t))$
where $\sigma_i$ are the singular values of $X$. 
By the definition of singular values, we have that $\sigma_i = \sqrt{\lambda_i(\transpose{X} X)}$ where $\lambda_i(\transpose{X} X)$ are the eigenvalues of $\transpose{X} X$ ordered from largest to smallest. 
Now, the norm of a diagonal matrix is equal to the largest element of the diagonal. 
We minimise the elements of this diagonal by picking the smallest eigenvalue of $\transpose{X}$. 
Therefore, we want to find an upper bound for this expression, which means that we should look for a lower-bound of $\lambda_t(\transpose{X} X)$. 

To bound the smallest eigenvalue of $\transpose{X} X$, we can use Weyl's inequality, which states that for $M = N + R$, where $N, R \in \mathbb{R}^{n\times n}$ are symmetric matrices with their respective eigenvalues $\mu_i, \nu_i, \rho_i$ ordered from largest to smallest for $i=1,\dots, n$,
the inequalities $\nu _{i}+\rho _{n}\leq \mu _{i}\leq \nu _{i}+\rho _{1}$ hold for all $i$. 
Moreover, if the matrix $R$ is positive semi-definite, then 
$\mu _{i} \geq \nu _{i}$ for all $i$.
We can apply this result to bound the smallest eigenvalue of the matrix $\transpose{X} X$, which depends both on the data and poisoning samples. To do this, we can write $\transpose{X} X$ as the sum of the matrices $\transpose{X^0} X^0$ and $\transpose{Z} Z$ where $X^0$ is the matrix with only data samples, and $Z$ is the matrix of poisoning samples:
\begin{equation}
    \transpose{X} X = \transpose{X^0} X^0 + \transpose{Z} Z
\end{equation}

Let $M = \transpose{X} X$, $N=\transpose{X^0} X^0$ and $R=\transpose{Z} Z$. Symmetric matrices of the form $\transpose{Z} Z$ are always positive semi-definite, which means that the smallest eigenvalue of $ \transpose{X} X$ is bigger than or equal to the smallest eigenvalue of $\transpose{X^0} X^0$. Therefore, we have that:
\begin{align*}
    \|(\transpose{S} S + \lambda I)^{-1} \transpose{S} \|_2 
    & \leq \frac{\sqrt{\lambda_t(\transpose{X^0} X^0)}}{\lambda_t(\transpose{X^0} X^0) + \lambda}
\end{align*}
Similarly, we can divide $\boldsymbol{y}$ into original data samples and poisoning samples as $\boldsymbol{y}^0, \boldsymbol{y}^p, \boldsymbol{y}$.
All $\boldsymbol{y}^p$ are taken by randomly picking a subset of $\boldsymbol{y}^0$, and that all elements of $\boldsymbol{y}^0$ are in $[0,1]$. Following this, we can define a bound in terms of the original data and the number of poisoning samples: 
\begin{equation}
    \|(\boldsymbol{y} - \hat{y} \mathbf{1}_{n+q})\|_2 \leq \|\boldsymbol{y^0}\|_2 + \sqrt{q} + \frac{\sum_{i=1}^{n}y_i^0 + q}{\sqrt{n + q}}
\end{equation}
where $q$ is the number of poisoning samples, and $n$ the number of non-poisoned samples. 
Getting everything together we get:
\begin{align*}
    |\tilde{\regressionparam}_{i}|  \leq 
    & \|(\transpose{S} S + \lambda I)^{-1} \transpose{S} \|_2 \| (\boldsymbol{y} - \hat{y} \mathbf{1}_{n+q})\|_2  \leq \frac{\sqrt{\lambda_t(\transpose{X^0} X^0)}}{\lambda_t(\transpose{X^0} X^0) + \lambda}  (\|\boldsymbol{y^0}\|_2 + \sqrt{q} + \frac{\sum_{i=1}^{n}y_i^0 + q}{\sqrt{n + q}})\\
\end{align*}
where $\hat{y}$ is also bounded by $\frac{\sum_{i=1}^{n}y_i^0}{n + q} \leq \hat{y} \leq \frac{\sum_{i=1}^{n}y_i^0 + q}{n + q}$. 

This completes the proof. 
\end{proof}
We use \Cref{thm: lower level bounds} to set upper and lower bounds for the lower-level variables, that is, the regression coefficients, when defining the variables of the model with an optimization solver. 
This procedure is also useful since it bounds the high point relaxation (HPR) of the bilevel problem.
The HPR is obtained by solving the upper-level problem without the optimality constraint of the lower level.
The boundness of the HPR is often a necessary assumption for many of the solution algorithms in the bilevel literature. 
Even though we have tailored these bounds for ridge regression, they can be extended to other convex machine learning models with closed-form solutions.
Future work could also make these bounds even tighter, integrating them into solution methods. 
However, this is out of the scope of this paper. 

\section{Solution method}\label{sec: solution method}
Commercial solvers such as Gurobi \cite{gurobi} can only return the optimal solution for a toy version of the single-level reformulation of the mixed-integer bilevel poisoning attack model with no more than two features of each type and only one poisoning sample. 
For slightly larger sizes, allowing up to 3 poisoning samples for the same number of features, Gurobi takes a long time to improve feasible solutions, and initial solutions obtained without a time limit are not better than simply optimising numerical features to local optimality. 
For even larger datasets, those solved by current benchmarks in the literature, it gets stuck at the initial LP. 
This shows that our MINLP problem is intractable with currently available solvers.
However, computational experiments on small datasets show that solving the mixed-integer non-linear programming (MINLP) problem and optimising categorical features leads to better results than solving just the non-linear programming (NLP) problem for numerical features. 
This motivates the need for some alternative method that allows us to optimise categorical features. 
Research interest in the study of mixed-integer bilevel problems has increased over the past decade since these problems are very common in real-life applications. 
However, most efficient solution algorithms are for the linear case and often rely on restrictive assumptions. 
The most widely used approach for solving this class of problems is branch-and-cut \cite{Fischetti2017, Tahernejad2020}.
When the lower level is non-linear but convex, it is often replaced by its optimality conditions.
Kleinert et al. \cite{OuterKleinert2021} propose an outer approximation algorithm to find the global optimum of mixed-integer quadratic bilevel problems with mixed-integer upper-level and continuous lower-level. 
In our case, we replace the unconstrained lower level with its KKT conditions. 
However, since upper-level decisions multiply lower-level variables in the lower-level objective, the reformulation of the lower-level problem will include multilinear terms,
making it unsuitable for this solution method.

In terms of solution algorithms for the non-linear case that do not involve single-level reformulations, most methods focus on converging bounds. 
Mitsos \cite{Mitsos2010} proposes a bounding algorithm to solve non-linear mixed-integer bilevel programs to global optimality. 
However, they are required to solve a MINLP to optimality at each step, meaning that they do not solve problems with more than three variables on each level. 
Merkert et al. \cite{Merkert2022} present an exact global solution algorithm for mixed-integer bilevel optimization with integer lower-level problems that can handle non-linear terms such as the product of upper- and lower-level variables. 
However, they require the lower-level variables to be integer, and in our case, they are all continuous. 
Avraamidou and Pistikopoulos \cite{Avraamidou2019} propose a solution algorithm to find exact global solutions of quadratic mixed-integer bilevel problems with bounded integer and continuous variables at both levels. 
Soares et al. \cite{Soares2021} present a bounding procedure for finding global optimality of mixed-integer bilevel problems. 
The problem with the algorithms from \cite{Soares2021, Avraamidou2019} is that they require solving the HPR with additional constraints at each iteration. 
Due to the structure of our lower-level problem, we would need to solve the maximisation of a convex function with non-linear constraints with bilinear terms at each iteration, which is not practical. 
Following this, we propose our own method to find feasible solutions to the mixed-integer bilevel problem.

In this section, we propose the use of a heuristic algorithm to get strong poisoning attacks.  
Our algorithm works by locally solving the continuous single-level reformulation of the bilevel problem with numerical features and iteratively \textit{flipping} categorical features in the direction that is most detrimental to the performance of the model. 
For the local optimization of the numerical features, we propose an enhanced version of the algorithm developed in \cite{Suvak2021}, which is presented in \Cref{subsec: SAS}.
The flipping procedure is covered in \Cref{ifcf}.

\subsection{Shifting attack strategy}\label{subsec: SAS}
In their paper, Şuvak at al. \cite{Suvak2021} tried optimising categorical features as continuous variables and then projecting them into binary, but they state that this approach did not show additional improvements. 
Following this, they decided to fix categorical features and focus on locally optimising numerical ones. 
To mitigate the possibility of getting stuck in local solutions, they develop an iterative attack algorithm. 
In this algorithm, poisoning attacks are iteratively added to the data, instead of all being optimised at the same time. 
They do this by partitioning the continuous upper-level variables into a $k$ disjoint subsets $ \compactpoisonfeatures{i}$ of equal size for $i=1, \dots, k$, where $\compactpoisonfeatures{} = \compactpoisonfeatures{1} \cup \dots \cup \compactpoisonfeatures{k}$.
They start solving the problem to optimise the subset of variables $\compactpoisonfeatures{1}$ for the full training data $D^{tr}$, which includes numerical $X^{tr}$ and categorical $B^{tr}$ features, as parameters.
Then, at each iteration, they increase the training data $D^{tr}$ by taking the union with the just poisoned subset.
After the first iteration, the new training data is $ D^{tr} = X^{tr} \cup B^{tr} \cup \compactpoisonfeatures{1} \cup B_1^p$, where $B_1^p$ are the categorical features (treated as parameters) associated with $\compactpoisonfeatures{1}$.
This allows the solver to potentially improve previously found optimal solutions. 
They refer to this algorithm as the \textit{ iterative attack strategy }(IAS).

The problem with this strategy is that it overlooks the fact that categorical features are fixed throughout the whole process. 
By leaving out samples that are not being poisoned, they ignore the impact that fixed categorical features might have on the regression parameters. 
We propose a new approach to overcome this, which we call \textit{shifting attack strategy} (SAS).
We also partition the poisoning samples into the same set of disjoint subsets $S = \{ \compactpoisonfeatures{i} \}_{i=1}^k$.
However, instead of optimising subsets of samples without taking into account future samples, we fix all poisoning attacks from the beginning and optimise batches of them by shifting along the data while all other samples are treated as parameters. 
This means that we let the training data be the original training data and all the poisoning samples that are not being poisoned at the moment. 
In other words, in the first iteration we solve problem \eqref{single-level problem} for variables $\compactpoisonfeatures{1}$, and parameters $D^{tr} = X^{tr} \cup B^{tr} \cup S\backslash \compactpoisonfeatures{1} \cup \bigcup_{T \in S\backslash \compactpoisonfeatures{1}} B^p_T$, where $B^p_T$ is the set of the categorical features associated with subset $T \subset S$.
An outline of this strategy can be found in \Cref{SAS algorithm}.

\begin{algorithm}
\caption{Shifting Attack Strategy (SAS)}
\begin{algorithmic}[1] \label{SAS algorithm}
\Require{Dataset, batch size of attacks $m$.}
\Ensure{Feasible solution $x^p$.}
\State Select subset of the training data to be poisoned;
\State Fix all numerical and categorical features;
\State Partition data into $n$ batches of size $m$;
\For{each batch}
\State Locally optimise selected samples and weights using a NLP solver;
\If{Previous objective is improved}
\State Update data with the new values;
\Else
\State Restore data to the old values;
\EndIf
\State Fix these new poisoned samples as parameters;
\EndFor
\State \Return Feasible solution $x^p$.
\end{algorithmic}
\end{algorithm}

\subsection{Iterative flipping of categorical features}\label{ifcf}
We embed the SAS algorithm in a heuristic algorithm to poison both numerical and categorical features which we call iterative flipping of categorical features (IFCF). 
The proposed algorithm iterates between poisoning numerical features with the SAS and updating categorical features. 
We propose a method for flipping categorical features so that they iteratively increase the objective value of the leader, in other words, the mean squared error of the predictions on training data. 
The proposed algorithm is as follows. 
We first update all the numerical features using the SAS algorithm while fixing all the categorical features.
This process will give us the regression coefficients $\regressionparam$, which can be broken down into the intercept $c$, a vector for the weights of numerical features $\numweights$ and a set of vectors $\{\catweights\}_{j=1}^t$. 
Then for each sample, we find the maximum and minimum weight for each categorical feature and define
\begin{equation*}
    U_{j} =  \max_i w_{ji}^c \ \textup{  and  } \ L_{j} = \min_i w_{ji}^c, \textup{  for  }j=1, \dots, t.
\end{equation*}
These are the weights that had the largest influence on the MSE.
We take those categorical features with the highest weights to \textit{push the predictions up} by making all those columns equal to 1 while keeping everything else 0.
Similarly, we take the smallest weights and \textit{push the prediction down} by making those columns 1 and the rest 0. 
After this, we calculate the error of the two new predictions for sample $k$: 
\begin{equation*}
    E_k^{\uparrow} = | \hat{y} - {\numweights}^{\top} \features{} + \sum_{j=1}^{\numbercatfeatures} U_j + \bias | \ \textup{  and  } \ 
    E_k^{\downarrow} = | \hat{y} - {\numweights}^{\top} \features{} + \sum_{j=1}^{\numbercatfeatures} L_j + \bias |.
\end{equation*}
We choose the one with the highest contribution to the increase in MSE. 
If none of the pushing routines increase the MSE, we leave them as they are. 
After this, we run ridge regression on the newly poisoned samples to update weights. 
This step is much cheaper than solving a mixed-integer or non-convex optimization problem. 
We iterate this process over all samples. 
After that, we again optimise numerical features by running SAS a second time. 
We repeat this whole process until there is no improvement in $E_k^{\uparrow}$ and $E_k^{\downarrow}$ for all $k$. 
Numerical experiments showed that this is achieved after just one iteration. 
The IFCF algorithm is summarised in \Cref{alg: IFCD}. 
\begin{algorithm}
\caption{Iterative Flipping of Categorical Features (IFCF)}
\begin{algorithmic}[1]\label{alg: IFCD}
\Require{Dataset, poisoning rate $r$, batch size of SAS, epochs of the whole process.}
\Ensure{Feasible solution $x^p$.}
\State optimise numerical features with SAS while keeping categorical features fixed;
\State Fix numerical features;
\For{Epoch in total number of epochs}
\For{Sample in poisoning samples}
\State Choose categories $U_{j}$ with the highest weights in each category;
\State Choose categories $L_{j}$ with the smallest weights in each category;
\State Make predictions with both subsets;
\State Pick the direction that maximizes $E_k$ (the distance between prediction and target).
\If{Previous objective is improved}
\State Update data with the new values;
\State Solve ridge regression to update weights. 
\Else
\State Restore it to the old values;
\EndIf
\EndFor 
\State Solve problem with SAS and NLP solver with the current data;
\If{Previous objective is improved}
\State Update data with the new values;
\Else
\State Restore it to the old values;
\EndIf
\EndFor
\State \Return Feasible solution $x^p$.
\end{algorithmic}
\end{algorithm}

\section{Computational experiments} \label{sec: poisoning computational experiment}

Following previous related works \cite{Suvak2021, Jagielski2018, Wen2021}, we perform all of our experiments on two publicly available regression datasets: \textit{House Price dataset} and \textit{Healthcare dataset}.
These datasets are available at \href{https://github.com/jagielski/manip-ml}{https://github.com/jagielski/manip-ml}.
We split the data into 300 training, 250 validation, and 500 testing samples, as in \cite{Suvak2021}.
The processed House Price dataset has 35 numerical and 19 categorical features, and the Healthcare dataset has 3 numerical and 38 categorical features. 
Each of the categorical features is converted into several categories when one-hot encoded.
However, we also consider some extra subsets of the datasets with fewer categorical features. 
For each dataset, we explore two extra datasets: one of all numerical and 5 categorical features, and one of all numerical and 10 categorical features.
Similarly to \cite{Suvak2021}, we consider the poisoning rates $r$ of 4\%, 8\%, 12\%, 16\%, and 20\%. 
We use \cite{Suvak2021} as a benchmark since they showed that modelling poisoning attacks as bilevel optimization problems produces stronger attacks than the previous state-of-the-art method of gradient descent for almost all cases. 
We initialise the poisoning samples as in \cite{Suvak2021, Jagielski2018}: for each poisoning rate, we randomly select $r$\% samples from the training data and set the response variable to be $1-y$, rounded to the nearest integer. 
As in \cite{Suvak2021}, we solve the single-level reformulation of the problem to local optimality using KNITRO (version 11) \cite{Byrd2006}. 
The ridge regularization parameter is chosen by applying grid search with 10-fold cross-validation on the unpoisoned data. 
All experiments are run on a MacBook Pro with 1.4 GHz Quad-Core Intel Core i5 and 8 GB RAM.

We start by comparing the performance of the IAS and the SAS for just the optimization of numerical features. 
The average results for 10 different train-test splits are shown in \Cref{tab: SAS vs IAS}, where $\Delta$ denotes the increase in MSE, defined as the percentage of improvement of our approach over the benchmark. 
Positive values of $\Delta$ denote an improvement in poisoning attacks with respect to the benchmark (increase of MSE), while negative values would indicate a worsening of the attack. 
In general, the SAS improves the MSE for both datasets and all numbers of categorical features.
These improvements go from between 0.01\% and 7.79\%. 
The improvements in the House Price dataset are similar for all subsets.
The smaller the poisoning rate, the bigger the improvement of SAS over IAS.
For the Healthcare dataset, the subset with 5 categorical features is the one with the largest improvements, reaching 0.73\% for the 4\% poisoning rate.
The difference in performance for the Healthcare dataset is expected to be smaller since this dataset has just three numerical features.
Since IAS and SAS algorithms are strategies for optimising numerical features, it makes sense that datasets with a large number of numerical features benefit more from the improvements of SAS. 
\begin{table}[!htbp]
\centering
\captionof{table}{Comparison of one run of the iterative attack strategy (IAS) and the shifting attack strategy (SAS).}\label{tab: SAS vs IAS}
\begin{adjustbox}{width=\textwidth}
\begin{tabular}{lrrrrrrrrrrrrr}
\toprule
& & \multicolumn{3}{c}{All numerical 5 categorical} && \multicolumn{3}{c}{All numerical 10 categorical} && \multicolumn{3}{c}{All numerical all categorical} \\
\cmidrule{3-5}\cmidrule{7-9}\cmidrule{11-13}
\rule{0pt}{10pt} 
Dataset & $r$(\%) & MSE$_{\textup{IAS}}$ & MSE$_{\textup{SAS}}$ &  $\Delta$\textbf{(\%)} && MSE$_{\textup{IAS}}$ & MSE$_{\textup{SAS}}$ &  $\Delta$\textbf{(\%)} && MSE$_{\textup{IAS}}$ & MSE$_{\textup{SAS}}$ &  $\Delta$\textbf{(\%)}\\
\midrule
House Price & 4 & 0.0109 & 0.0115 & 5.90 && 0.0106 & 0.0112 & 5.29 && 0.0107 & 0.0112 & 4.79 \\
& 8 & 0.0133 & 0.0144 & 7.79 && 0.0132 & 0.0140 & 6.60 && 0.0132 & 0.0140 & 5.98 \\
& 12 & 0.0168 & 0.0179 & 6.48 && 0.0168 & 0.0176 & 5.17 && 0.0168 & 0.0176 & 4.69 \\
& 16 & 0.0213 & 0.0222 & 4.37 && 0.0214 & 0.0221 & 3.34 && 0.0215 & 0.0221 & 3.03 \\
& 20 & 0.0268 & 0.0274 & 2.43 && 0.0269 & 0.0274 & 1.86 && 0.0270 & 0.0274 & 1.68 \\
\midrule
Healthcare & 4 & 0.0041 & 0.0041 & 0.73 && 0.0039 & 0.0039 & 0.06 && 0.0038 & 0.0038 & 0.03 \\
& 8 & 0.0072 & 0.0072 & 0.31 && 0.0071 & 0.0071 & 0.04 && 0.0070 & 0.0071 & 0.03 \\
& 12 & 0.0120 & 0.0120 & 0.10 && 0.0119 & 0.0120 & 0.03 && 0.0119 & 0.0119 & 0.02 \\
& 16 & 0.0181 & 0.0181 & 0.04 && 0.0181 & 0.0181 & 0.01 && 0.0180 & 0.0180 & 0.02 \\
& 20 & 0.0252 & 0.0252 & 0.02 && 0.0252 & 0.0252 & 0.01 && 0.0251 & 0.0251 & 0.01 \\
\bottomrule
\end{tabular}
\end{adjustbox}
\end{table}

The performance of the IAS and SAS algorithms highly depends on the regularization hyperparameter and the size of the batches that are iteratively optimised. \Cref{fig: hyperparameters} shows the MSE of the IAS and SAS algorithms for 5 different values for the hyperparameter $\lambda$ in the ridge regularization: 0.001, 0.01, 0.1, 1, and 10. 
As we can see, the larger the regularization parameter, the smaller the difference in the performance of the two methods. 
Moreover, this distance seems to be smaller for smaller poisoning rates. 
There is also a difference in performance depending on the size of the batches we optimise locally, which is also related to the hyperparameter. 
\Cref{fig: batch} shows the MSE for both algorithms and all poisoning rates for different batch sizes.
The figure on the left has the results for the hyperparameter chosen using cross-validation. 
For this hyperparameter, the batch size does not affect the performance of the algorithms. 
However, for a smaller regularization hyperparameter, the smaller the batch size the worse the performance of IAS compared to SAS.
Overall, we can see that we can achieve very good results using SAS with a batch size of half the samples, which is cheaper than smaller sizes.

\begin{figure}[!htbp]
    \centering
    \includegraphics[width=.6\linewidth]{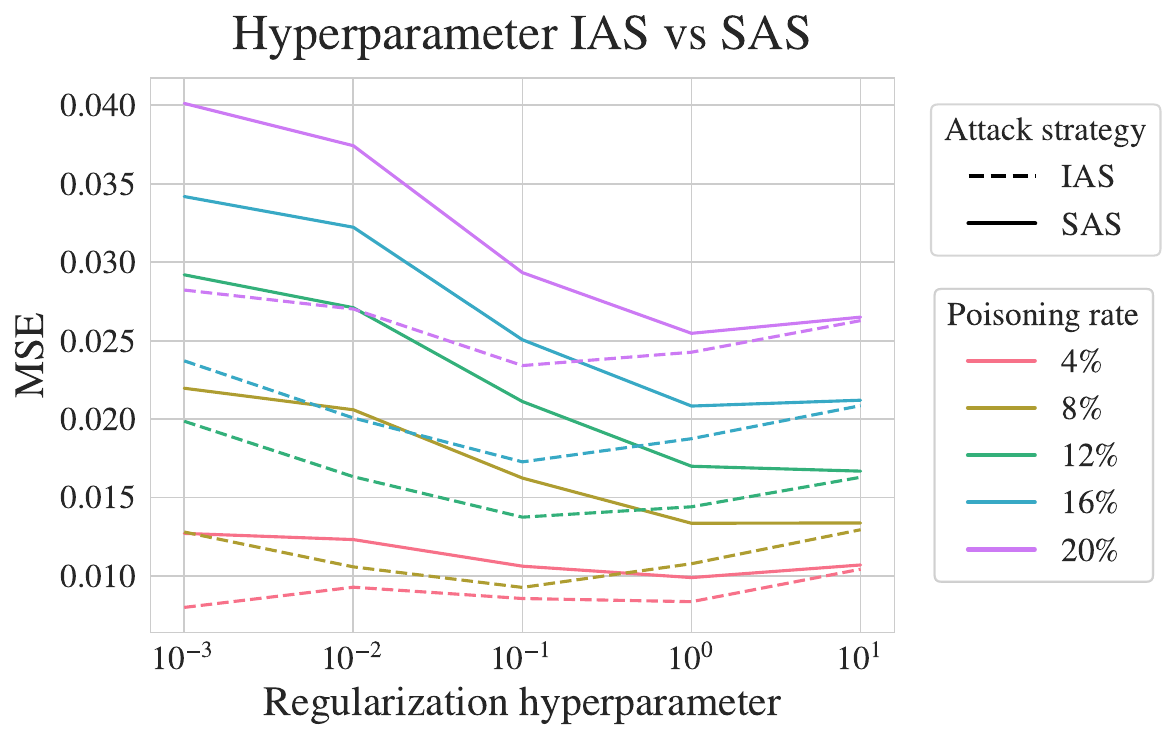}
    \caption{Difference between IAS and SAS for all poisoning rates and different regularization hyperparameters for the House Price dataset with all numerical and 5 categorical features and a 0.1 batch size.}
    \label{fig: hyperparameters}
\end{figure}
\begin{figure}[!htbp]
\centering
\begin{subfigure}{.45\textwidth}
  \centering
  \includegraphics[width=\linewidth]{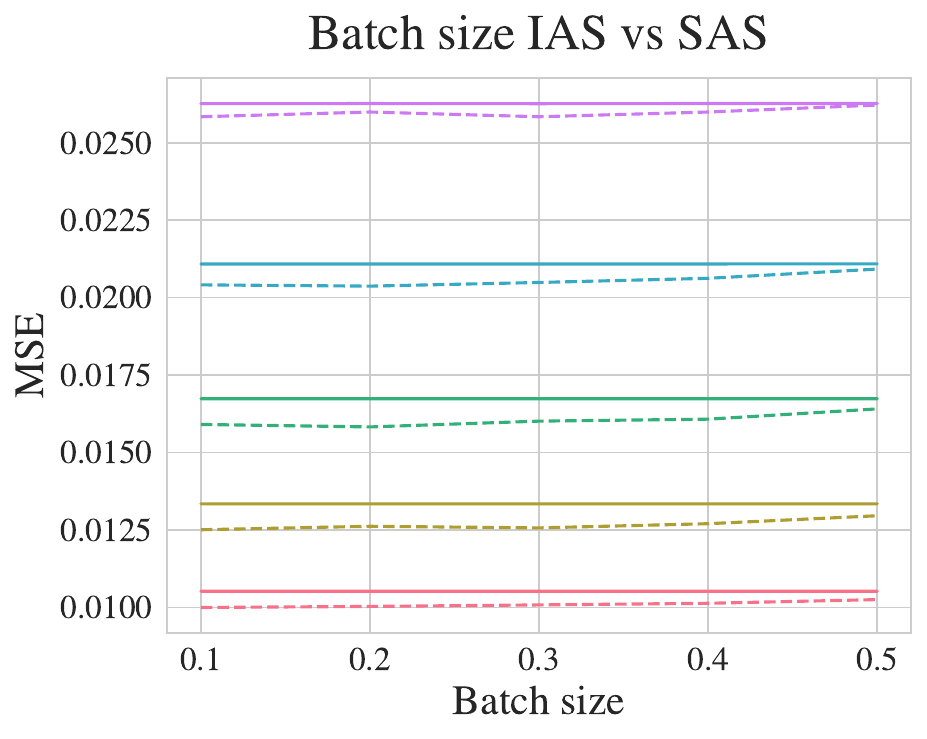}
  \caption{$\lambda$ = 4.211 (cross-validation)}
  \label{fig: batch cross}
\end{subfigure}%
\begin{subfigure}{.57\textwidth}
  \centering
  \includegraphics[width=\linewidth]{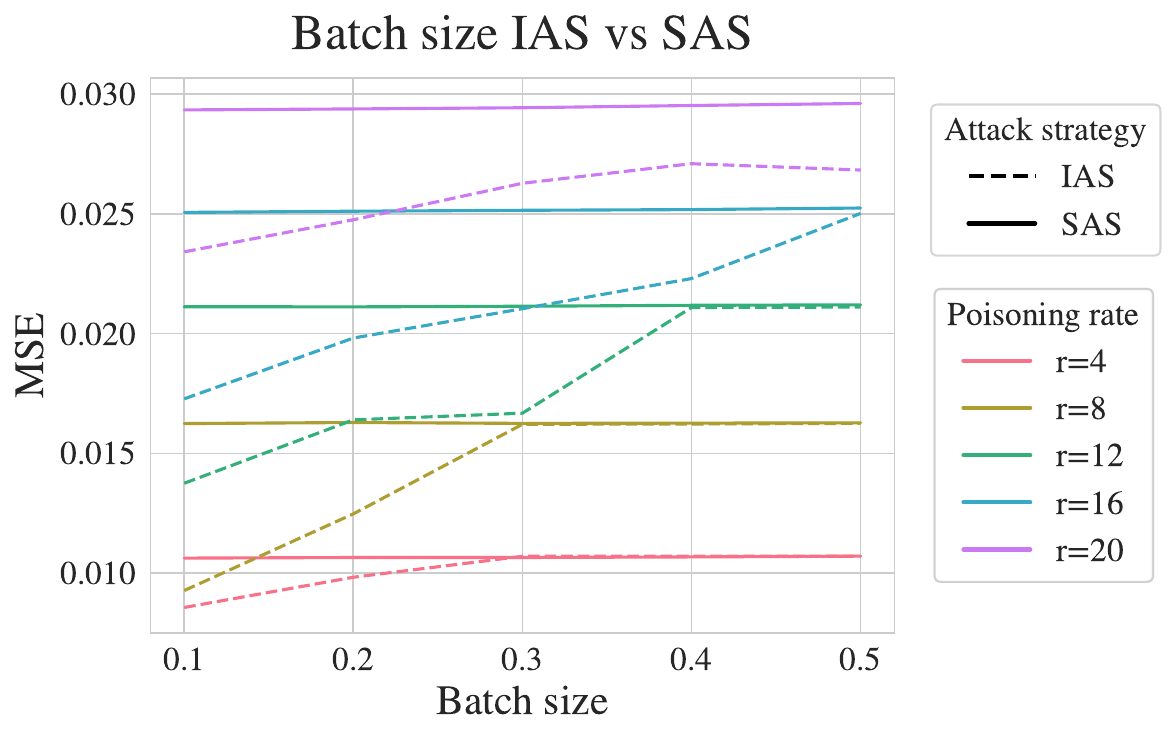}
  \caption{$\lambda$ = 0.1}
  \label{fig:batch 0.1}
\end{subfigure}
\caption{MSE of all poisoning rates for different batch sizes for the House Price dataset with all numerical and 5 categorical features.}
\label{fig: batch}
\end{figure}

We now compare the performance of the IFCF algorithms with the approach from \cite{Suvak2021}, where only numerical features are optimised using IAS.
From now on, we fix the batch size of IAS and SAS to 10\%.
We chose this value because it has the best performance for both methods. 
We run experiments for two types of datasets.
First, we have the training dataset, which is used when solving the optimization problem. 
Then, we test these poisoning attacks for some unseen testing data. 
We do this by taking the weights that are obtained when fitting the model to the poisoned data and then using them to make predictions for the unseen data.
Note that since this uses the same solutions, it does not make sense to present the computational time. 
\Cref{tab: allnum5cat} shows the results for all numerical and 5 categorical features. 
The IFCF algorithm improves the MSE for both training and testing data for the House Price dataset.
For training data, the improvements range from 4.55\% for the 20\% poisoning rate to 6.79\% for the 16\% rate.
In general, a higher poisoning rate leads to greater improvements. 
The MSE of the test data increases similarly, going from 2.30\% to 5.15\%.
The MSE of the healthcare dataset also increased with the IFCF, reaching an improvement of 5.01\% for the training dataset and 4.61\% for the testing data.
For this dataset, the larger the poisoning rate, the smaller the improvement. 
In terms of computational time, the benchmark is always faster than the IFCF algorithm.
This is as expected since the IFCF algorithm solves the NLP problem of the numerical features twice, while the benchmark does it only once. 
Moreover, it also does the flipping of categorical features. 
Still, the time taken to generate samples using the IFCF algorithm is less than double the time taken to solve the benchmark while the MSE of the IFCF is always larger.

\begin{table}[!htbp]
\centering
\captionof{table}{Average MSE and time for 10 runs of benchmark \cite{Suvak2021} and IFCF for all numerical and 5 categorical features.} \label{tab: allnum5cat}
\begin{tabular}{lrrrrrrrr}
\toprule
& & & \multicolumn{2}{c}{Suvak et al. \cite{Suvak2021}} && \multicolumn{2}{c}{IFCF} \\
\cmidrule{4-5}\cmidrule{7-8}
\multicolumn{1}{l}{Dataset} 
& \multicolumn{1}{c}{Type} 
& \multicolumn{1}{c}{$r$ (\%)} 
& \multicolumn{1}{c}{MSE} 
& \multicolumn{1}{c}{Time (s)} 
&
& \multicolumn{1}{c}{MSE} 
& \multicolumn{1}{c}{Time (s)} 
& \multicolumn{1}{c}{$\Delta$ (\%)}   \\ 
\midrule
House Price & Train & 4 & 0.011638 & 122.98 && 0.012172 & 206.24 &   4.59 \\
& &  8 & 0.015585 & 143.84 && 0.016358 & 252.26 &   4.95 \\
& & 12 & 0.020002 & 165.00 && 0.021167 & 297.93 &   5.82 \\
& & 16 & 0.024818 & 190.83 && 0.026502 & 346.56 &   6.79 \\
& & 20 & 0.030336 & 182.06 && 0.031715 & 351.08 &   4.55 \\
\cmidrule{2-9}
& Test & 4 & 0.012731 & - && 0.013024 & - &   2.30 \\
& &  8 & 0.016689 & - && 0.017241 & - &   3.31 \\
& & 12 & 0.020884 & - && 0.021674 & - &   3.78 \\
& & 16 & 0.025634 & - && 0.026955 & - &   5.15 \\
& & 20 & 0.031381 & - && 0.032358 & - &   3.11 \\
\midrule
Healthcare & Train & 4 & 0.003820 &  22.14 && 0.004011 &  35.13 &   5.01 \\
& &  8 & 0.007050 &  29.80 && 0.007281 &  46.75 &   3.28 \\
& & 12 & 0.011906 &  35.18 && 0.012135 &  56.76 &   1.92 \\
& & 16 & 0.018026 &  36.26 && 0.018249 &  67.83 &   1.23 \\
& & 20 & 0.025139 &  39.45 && 0.025352 &  66.21 &   0.85 \\
\cmidrule{2-9}
& Test & 4 & 0.004050 & - && 0.004237 & - &   4.61 \\
& &  8 & 0.007316 & - && 0.007535 & - &   2.99 \\
& & 12 & 0.012196 & - && 0.012421 & - &   1.84 \\
& & 16 & 0.018350 & - && 0.018566 & - &   1.18 \\
& & 20 & 0.025477 & - && 0.025695 & - &   0.86 \\
\bottomrule
\end{tabular}
\end{table}

When we increase the number of categorical features to 10, the results are very similar to those of the dataset with 5 features. 
Following this, we have decided to not include them in this part of the analysis.
The results of the full dataset with all the numerical and categorical features can be found in \Cref{tab: allnumallcat}.
For the House Price dataset, the increases in MSE go from 4.88\% for the 20\% poisoning rate to 8.81\% for the 8\% rate. 
Improvements of the 4\% rate for the testing data are on average 5.21\%, while for all larger poisoning rates, it is greater than 4.02\%. 
The impact of the IFCF algorithm on the Healthcare dataset is smaller. 
For training data, the largest improvement is for the 4\% poisoning rate with 4.17\%, while it is 3.56\% for the testing dataset.
The differences in computational time are as in the other two datasets since the main cost of the IFCF algorithm comes from running the NLP solver twice. 
When we decrease the tegularization hyperparameter, the performance of the IFCF is even better. 
\Cref{fig: house allnumallcat} shows all the runs for the House Price dataset with a hyperparameter $\lambda=0.1$, the average MSE among all the runs, and the geometric average of the increases in MSE.
For this hyperparameter, the improvements of the SAS algorithm over IAS are larger. 
The larger the poisoning rate, the bigger the improvement. 
The average improvement of the training data goes from 22\% for the 4\% poisoning rate to 36\% for the 20\% poisoning rate. 
For the testing data, improvements go from 11\% to 31\%.
We do not show results for the Healthcare dataset since the differences are smaller and harder to visualise, but similar patterns hold. 
Results for the datasets with 5 and 10 categorical features can be found in the appendix. 

\begin{table}[!htbp]
\centering
\captionof{table}{Average MSE and time for 10 runs of benchmark \cite{Suvak2021} and IFCF for all numerical and all categorical features.}
\begin{tabular}{lrrrrrrrrrrrrrrrrr}
\toprule
& & & \multicolumn{2}{c}{Suvak et al. \cite{Suvak2021}} && \multicolumn{2}{c}{IFCF} \\
\cmidrule{4-5}\cmidrule{7-8}
\rule{0pt}{10pt} 
Dataset & Type & $r$ (\%) & MSE & Time (s) && MSE & Time (s) & $\Delta$ (\%)   \\
    \midrule
House Price & Train & 4 & 0.010339 & 778.87 && 0.011049 & 1051.23 &   6.86 \\
& &  8 & 0.013643 & 822.50 && 0.014844 & 1155.86 &   8.81 \\
& & 12 & 0.017622 & 886.95 && 0.018890 & 1291.71 &   7.19 \\
& & 16 & 0.022180 & 922.82 && 0.023606 & 1367.21 &   6.43 \\
& & 20 & 0.027677 & 883.99 && 0.029028 & 1312.27 &   4.88 \\
\cmidrule{2-9}
& Test & 4 & 0.011844 & - && 0.012461 & - &   5.21 \\
& &  8 & 0.015358 & - && 0.016279 & - &   5.99 \\
& & 12 & 0.019055 & - && 0.020102 & - &   5.50 \\
& & 16 & 0.024098 & - && 0.025068 & - &   4.02 \\
& & 20 & 0.029026 & - && 0.030368 & - &   4.62 \\
\midrule
Healthcare & Train & 4 & 0.003832 & 1717.39 && 0.003992 & 2038.42 &   4.17 \\
& &  8 & 0.007066 & 1821.94 && 0.007256 & 2215.19 &   2.69 \\
& & 12 & 0.011915 & 1958.70 && 0.012116 & 2453.39 &   1.69 \\
& & 16 & 0.018011 & 2093.24 && 0.018236 & 2623.73 &   1.25 \\
& & 20 & 0.025114 & 2079.62 && 0.025337 & 2582.48 &   0.89 \\
\cmidrule{2-9}
& Test & 4 & 0.004070 & - && 0.004215 & - &   3.56 \\
& &  8 & 0.007316 & - && 0.007512 & - &   2.67 \\
& & 12 & 0.012172 & - && 0.012409 & - &   1.94 \\
& & 16 & 0.018282 & - && 0.018565 & - &   1.55 \\
& & 20 & 0.025388 & - && 0.025681 & - &   1.16 \\
\bottomrule
\end{tabular}
\label{tab: allnumallcat}
\end{table}

\begin{figure}[!htbp]
\centering
\begin{subfigure}{.5\textwidth}
  \centering
  \includegraphics[width=\linewidth]{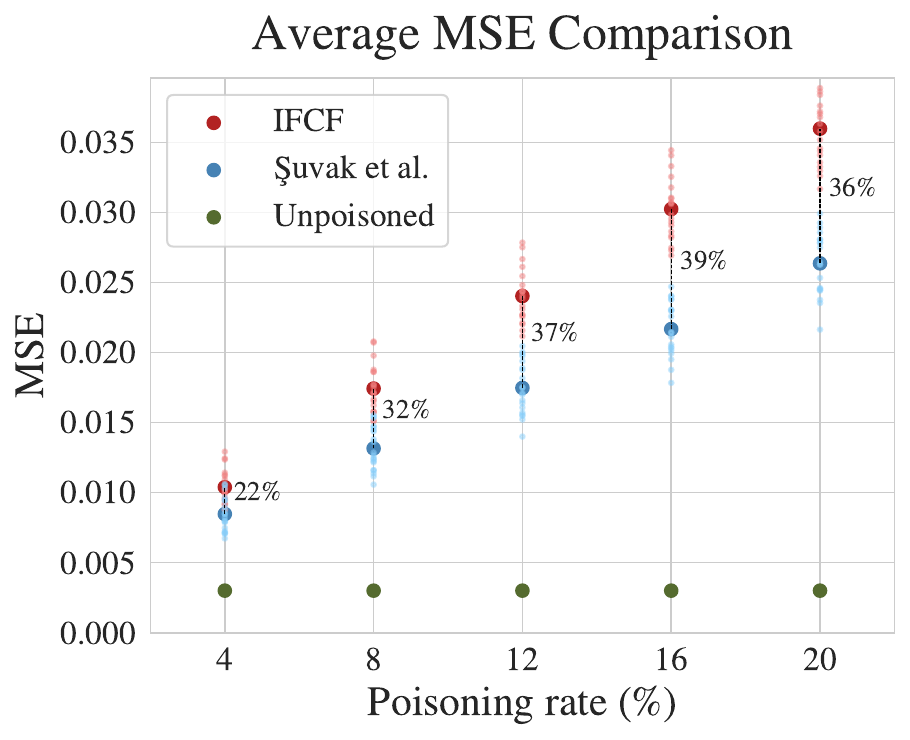}
  \caption{Train data}
\end{subfigure}%
\begin{subfigure}{.5\textwidth}
  \centering
  \includegraphics[width=\linewidth]{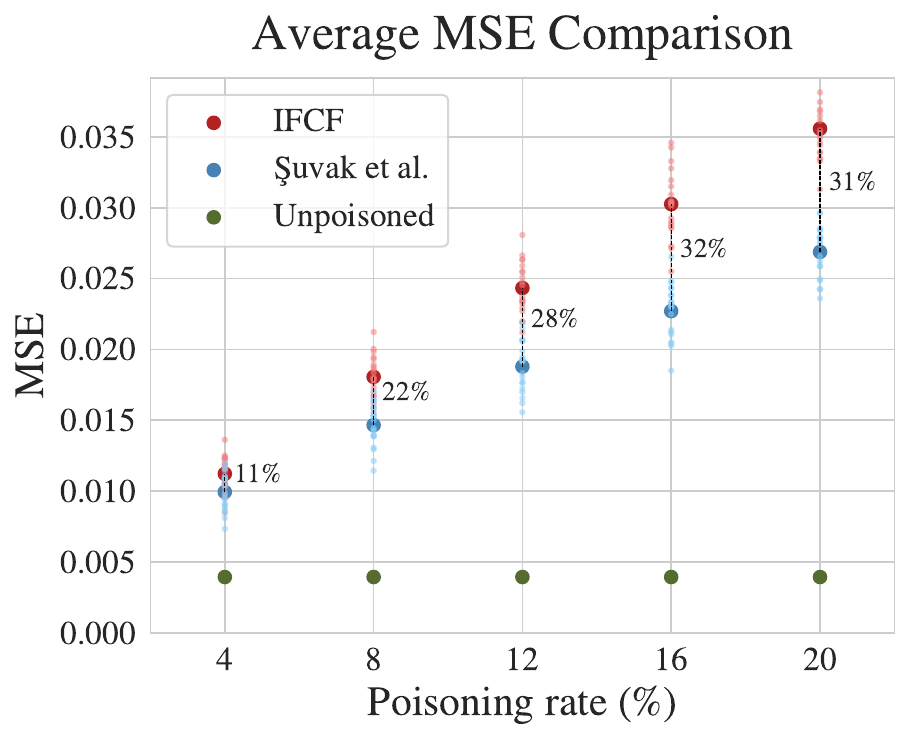}
  \caption{Test data}
\end{subfigure}
\caption{MSE of 20 runs and geometric mean of improvements for the House Price dataset with all numerical and all categorical features and $\lambda=0.1$.}
\label{fig: house allnumallcat}
\end{figure}

Overall, the larger the number of categorical features, the better the results are for the House Price dataset.
For the Healthcare dataset, they seem to be independent of the number of categorical features. 
Since the Healthcare dataset has only three numerical features, this suggests that the IFCF algorithm does not work so well with fewer numerical features, and it is a good method for a large dataset with many numerical and categorical features. 
Currently, poisoning data samples are not taken into account when finding the optimal hyperparameter $\lambda$. 
However, since this parameter has a very strong effect on the quality of the attacks, it would be interesting to explore in future research how poisoning attacks can be designed to also affect the hyperparameter selection. 
The proposed bilevel optimization approach demonstrates a significant improvement over benchmark techniques, highlighting the need for robust defences in adversarial machine learning involving categorical data. 
Our findings suggest that, by targeting categorical features specifically, poisoning attacks can achieve greater destabilization in certain machine learning models, paving the way for more targeted defences and testing protocols.
    
\section{Conclusion} \label{sec: poisoning conclusion}
Designing strong adversarial attacks is a useful tool to defend against potential attackers in adversarial machine learning, as it can help identify vulnerabilities, evaluate defences, and improve the robustness of machine learning models. 
In this paper, we have presented a novel approach for poisoning attacks of regression models in which categorical variables are modelled as binary decisions belonging to SOS sets of type 1. 
We frame these poisoning attacks as a mixed-integer bilevel optimization problem, which is later transformed into a single-level mixed-integer non-linear problem. 
A feasible solution for this problem that is better than previous attacks in the literature is found using an iterative algorithm that combines shifting along the numerical features to solve NLP problems to local optimality and a heuristic method to poison categorical features. 
We also present a method for bounding the lower-level variables, which allows us to bound the bilinear terms of the problem.
Computational experiments show that our algorithm improves the quality of the attack strategies over the benchmark for all the datasets considered. 
Moreover, our attacks generalise well to testing data.
The bilevel optimization approach developed for poisoning attacks in ridge regression models with categorical features, while novel, presents certain limitations. 
The use of heuristic methods for generating poisoning attacks means that solution optimality may not be guaranteed in all cases. 
We know that our attacks are stronger than existing attacks, but we do not know how close they are to the optimal solution to the mixed-integer model.
Exploring more robust exact approaches, along with scalability improvements, could help broaden the applicability of the proposed method to a wider range of adversarial machine learning scenarios.
The main challenge with exact approaches lies in the computational complexity introduced by the mixed-integer bilevel structure, especially as the number of categorical variables increases. 
An exact algorithm’s performance may degrade as data complexity rises, limiting its applicability to large-scale datasets or models with extensive categorical features. 
Another alternative is to extend our solution method to also include upper bounds that certify the quality of the attacks, but this is out of the scope of this paper. 
The model proposed in this paper can be used as the foundation for building and testing defence strategies. 
For example, we can have this problem as the two lower levels of a trilevel optimization problem. 
Here, the first level is the defence strategy, the second level is the attacker, and the last level is the ML model. 
It can also be used to study how sensitive and vulnerable ML models are to data changes in the direction that is most detrimental to the model. 
These extensions are suggested for the scope of future research.

\printbibliography

\newpage
\section*{Appendix}
\begin{figure}[!htbp]
\centering
\begin{subfigure}{.5\textwidth}
  \centering
  \includegraphics[width=\linewidth]{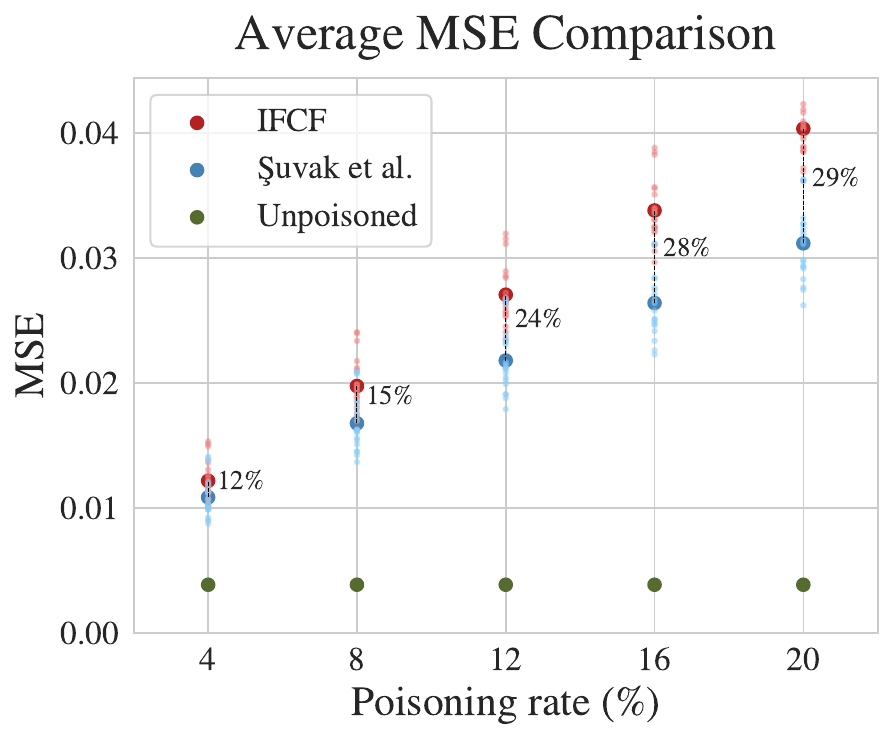}
  \caption{Training data}
\end{subfigure}%
\begin{subfigure}{.5\textwidth}
  \centering
  \includegraphics[width=\linewidth]{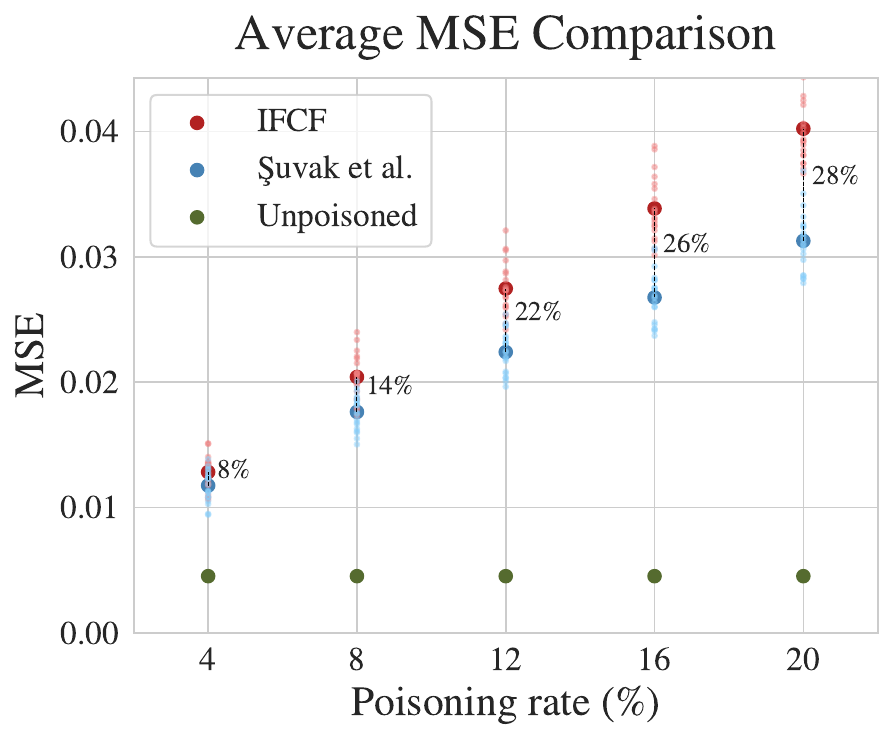}
  \caption{Testing data}
\end{subfigure}
\caption{MSE of 20 runs and geometric mean of improvements for the House Price dataset with all numerical and 5 categorical features and $\lambda = 0.1$.}
\label{fig: house allnum5cat}
\end{figure}

\begin{figure}[!htbp]
\centering
\begin{subfigure}{.5\textwidth}
  \centering
  \includegraphics[width=\linewidth]{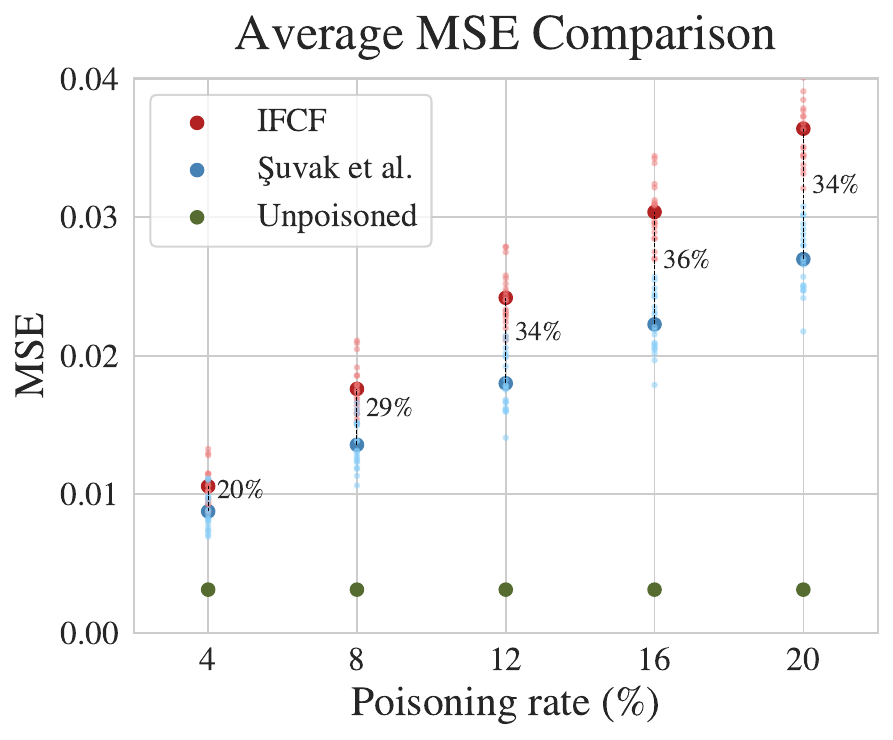}
  \caption{Training data}
\end{subfigure}%
\begin{subfigure}{.5\textwidth}
  \centering
  \includegraphics[width=\linewidth]{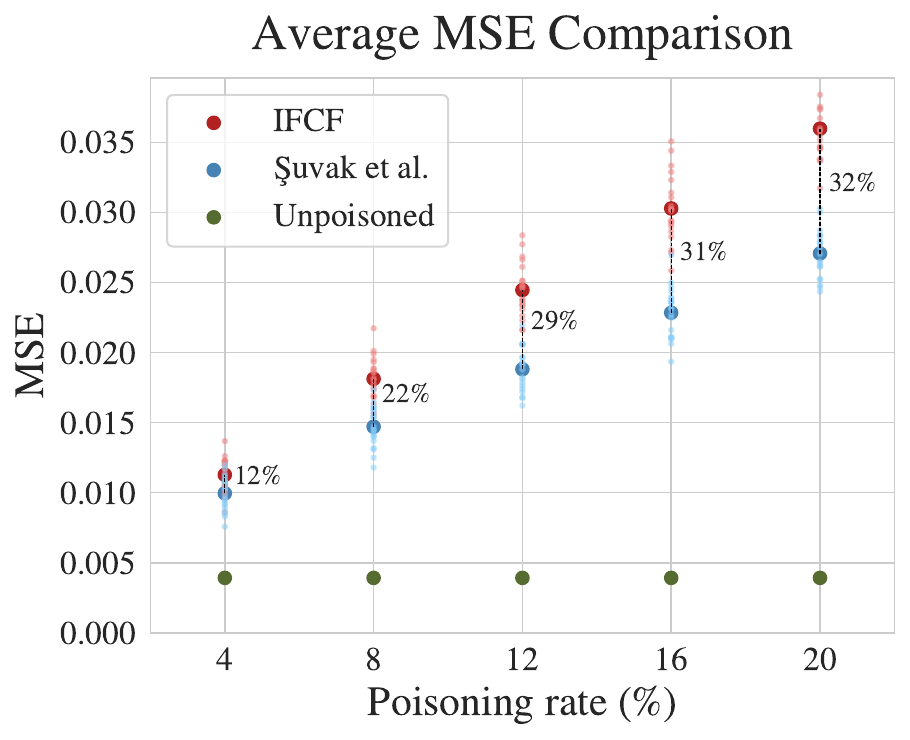}
  \caption{Testing data}
\end{subfigure}
\caption{MSE of 20 runs and geometric mean of improvements for the House Price dataset with all numerical and 10 categorical features and $\lambda=0.1$.}
\label{fig: house allnum10cat}
\end{figure}

\end{document}